\newtheorem{theor}{Theorem}
\newtheorem{corol}{Corollary}
\newtheorem{lemma}{Lemma}
\newcommand{\footremember}[2]{%
    \footnote{#2}
    \newcounter{#1}
    \setcounter{#1}{\value{footnote}}%
}
\newcommand{\Mod}[1]{\ (\mathrm{mod}\ #1)}
\begin{document}
\title{Quantified advantage of discontinuous weight selection in approximations with deep neural networks}
\author{Dmitry Yarotsky\footremember{skoltech}{Skolkovo Institute of Science and Technology, Skolkovo Innovation Center, Building 3, Moscow  143026
Russia}\footremember{iitp}{Institute for Information Transmission Problems, Bolshoy Karetny per. 19, build.1, Moscow 127051, Russia}\\
\texttt{d.yarotsky@skoltech.ru}
}
\date{}
\maketitle

\begin{abstract} We consider approximations of 1D Lipschitz functions by deep ReLU networks of a fixed width. We prove that without the assumption of continuous weight selection the uniform approximation error is lower than with this assumption at least by a factor logarithmic in the size of the network. 

\medskip
\noindent
{\bf Keywords: } Continuous nonlinear widths; Deep neural networks; ReLU activation
\end{abstract}

\section{Introduction}
In the study of parametrized nonlinear approximations, the framework of continuous nonlinear widths of DeVore \emph{et al.} \cite{devore1989optimal} is a general approach based on the assumption that approximation parameters depend continuously on the approximated object. Under this assumption, the framework provides lower error bounds for approximations in several important functional spaces, in particular Sobolev spaces. 

In the present paper we will only consider the simplest Sobolev space $W^{1,\infty}([0,1]),$ i.e. the space of Lipschitz functions on the segment $[0,1]$. The norm in this space can be defined by $\|f\|_{W^{1,\infty}([0,1])}=\max(\|f\|_\infty, \|f'\|_\infty),$ where $f'\in L^\infty([0,1])$ is the weak derivative of $f$, and $\|\cdot\|_\infty$ is the norm in $L^\infty([0,1])$. We will denote by $B$ the unit ball in $W^{1,\infty}([0,1])$. 
Then the following result contained in Theorem 4.2 of \cite{devore1989optimal} establishes an asymptotic lower bound on the accuracy of approximating functions from $B$ under assumption of continuous parameter selection:
\begin{theor}\label{th:1}
Let $\nu$ be a positive integer and $\eta:\mathbb R^\nu\to L^\infty([0,1])$ be any map between the space $\mathbb R^\nu$ and the space $L^\infty([0,1])$. Suppose that there is a continuous map $\psi:B\to \mathbb R^\nu$ such that $\|f-\eta(\psi(f))\|_\infty\le \epsilon$ for all $f\in B$. Then $\epsilon\ge \frac{c}{\nu}$ with some absolute constant $c$.
\end{theor}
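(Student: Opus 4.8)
The plan is to reduce the statement to the Borsuk--Ulam theorem by testing the hypothesized approximation scheme on a carefully normalized $(\nu+1)$-parameter family of Lipschitz ``bumps''.

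First I would fix the spatial resolution. Partition $[0,1]$ into $\nu+1$ consecutive subintervals $I_1,\dots,I_{\nu+1}$ of equal length $\tfrac{1}{\nu+1}$, and on each $I_j$ place a tent function $\phi_j$ that vanishes at the endpoints of $I_j$, rises with slope $+1$ to the midpoint and descends with slope $-1$, so that $\phi_j$ is supported in $I_j$ with $\|\phi_j'\|_\infty=1$ and $\|\phi_j\|_\infty=\tfrac{1}{2(\nu+1)}$. For $\mathbf a=(a_1,\dots,a_{\nu+1})\in\mathbb R^{\nu+1}$ set $f_{\mathbf a}=\sum_{j=1}^{\nu+1} a_j\phi_j$. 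Because the $\phi_j$ have pairwise disjoint supports, $f_{\mathbf a}$ is Lipschitz with $\|f_{\mathbf a}'\|_\infty=\|\mathbf a\|_\infty$ and $\|f_{\mathbf a}\|_\infty=\tfrac{1}{2(\nu+1)}\|\mathbf a\|_\infty$; hence (for $\nu\ge1$) $\|f_{\mathbf a}\|_{W^{1,\infty}([0,1])}=\|\mathbf a\|_\infty$, so $f_{\mathbf a}\in B$ whenever $\|\mathbf a\|_\infty\le1$. The map $\mathbf a\mapsto f_{\mathbf a}$ is linear, hence continuous from $\mathbb R^{\nu+1}$ to $L^\infty([0,1])$, and it is odd: $f_{-\mathbf a}=-f_{\mathbf a}$.

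Next comes the topological input. Let $\Sigma=\{\mathbf a\in\mathbb R^{\nu+1}:\|\mathbf a\|_\infty=1\}$ be the boundary of the cube $[-1,1]^{\nu+1}$; radial projection is a homeomorphism $\Sigma\cong S^\nu$ commuting with the antipodal map, so $\Sigma$ carries a free $\mathbb Z_2$-action. The composition $\Phi=\psi\circ(\mathbf a\mapsto f_{\mathbf a})\colon\Sigma\to\mathbb R^\nu$ is continuous, using the hypothesized continuity of $\psi$ on $B\supseteq\{f_{\mathbf a}:\mathbf a\in\Sigma\}$. Since $\nu<\nu+1$, the Borsuk--Ulam theorem yields a point $\mathbf a_0\in\Sigma$ with $\Phi(\mathbf a_0)=\Phi(-\mathbf a_0)$, that is $\psi(f_{\mathbf a_0})=\psi(f_{-\mathbf a_0})$. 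Setting $h=\eta(\psi(f_{\mathbf a_0}))=\eta(\psi(f_{-\mathbf a_0}))\in L^\infty([0,1])$, the triangle inequality and the approximation hypothesis give $\|f_{\mathbf a_0}-f_{-\mathbf a_0}\|_\infty\le\|f_{\mathbf a_0}-h\|_\infty+\|h-f_{-\mathbf a_0}\|_\infty\le2\epsilon$, while $f_{\mathbf a_0}-f_{-\mathbf a_0}=2f_{\mathbf a_0}$ has norm $\tfrac{1}{\nu+1}\|\mathbf a_0\|_\infty=\tfrac{1}{\nu+1}$. Therefore $\epsilon\ge\tfrac{1}{2(\nu+1)}\ge\tfrac{c}{\nu}$ with an absolute constant (e.g. $c=\tfrac14$).

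The step I expect to be the main obstacle is the choice of normalization of the test family. Using the Euclidean sphere in $\mathbb R^{\nu+1}$ would only force the largest coordinate of $\mathbf a_0$ to be $\gtrsim\nu^{-1/2}$, costing a spurious $\sqrt\nu$ factor; working on the $\ell^\infty$-sphere $\Sigma$---on which $\|\cdot\|_\infty$ is identically $1$ while the Lipschitz constraint is exactly saturated---is precisely what delivers the sharp $1/\nu$ rate. A secondary technical point is the $\mathbb Z_2$-equivariant identification $\Sigma\cong S^\nu$ needed to apply Borsuk--Ulam in its standard form; this is routine but should be stated.
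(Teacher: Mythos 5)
Your proof is correct. Note, however, that the paper itself gives no proof of this statement: it is quoted as a consequence of Theorem 4.2 of DeVore \emph{et al.}\ \cite{devore1989optimal}, so there is no internal argument to compare against. Your Borsuk--Ulam argument --- an odd, continuous embedding of the $\ell^\infty$-sphere of $\mathbb R^{\nu+1}$ into $B$ via disjointly supported unit-slope tents, followed by the antipodal coincidence $\psi(f_{\mathbf a_0})=\psi(f_{-\mathbf a_0})$ and the triangle inequality --- is essentially the classical proof of the continuous nonlinear width lower bound in that reference, and your normalization remark is on point: the $\ell^\infty$-sphere (equivalently, taking the \emph{maximum} over sign patterns of the tents rather than a Euclidean combination) is exactly what keeps every coordinate of $\mathbf a_0$ available at unit size and yields the sharp $1/\nu$ rather than $1/(\nu\sqrt\nu)$ or similar losses. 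The only point worth making explicit is that the continuity of $\mathbf a\mapsto f_{\mathbf a}$ holds in the $W^{1,\infty}$ norm (the map is linear from a finite-dimensional space), so the composition with $\psi$ is continuous whichever of the two natural topologies on $B$ the hypothesis refers to.
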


The bound $\frac{c}{\nu}$ stated in the theorem  is attained, for example, by the usual piecewise-linear interpolation with uniformly distributed breakpoints. Namely, assuming $\nu\ge 2$, define $\psi(f)=(f(0), f(\frac{1}{\nu-1}),\ldots, f(1)))$ and define $\eta(y_1,\ldots,y_\nu)$ to be the continuous piecewise-linear function with the nodes $(\frac{n-1}{\nu-1}, y_n), n=1,\ldots,\nu.$ Then it is easy to see that $\|f-\eta(\psi(f))\|_\infty\le \frac{1}{\nu-1}$ for all $f\in B.$  

The key assumption of Theorem \ref{th:1} that  $\psi$ is continuous need not hold in applications, in particular for approximations with neural networks. The most common practical task in this case is to optimize the weights of the network so as to obtain the best approximation of a specific function, without any regard for other possible functions. Theoretically, Kainen \emph{et al.} \cite{kainen1999approximation} prove that already for networks with a single hidden layer the optimal weight selection is discontinuous in general. The goal of the present paper is to quantify the accuracy gain that discontinuous weight selection brings to a deep feed-forward neural network model in comparison to the baseline bound of Theorem \ref{th:1}.

\begin{figure}
\begin{center}
\includegraphics[clip,trim=15mm 5mm 5mm 5mm]{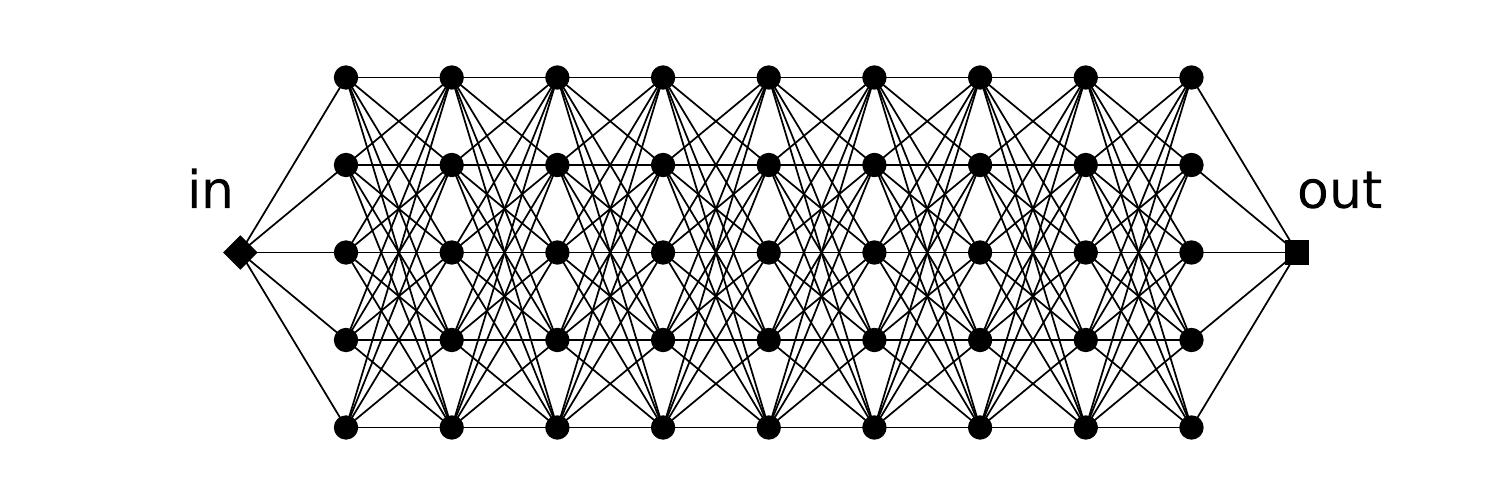}
\caption{The standard network architecture of depth $N=9$ and width $M=5$ (i.e., containing 9 fully-connected hidden layers, 5 neurons each).}\label{fig:stnet}
\end{center}
\end{figure}

Specifically, we consider a common deep architecture shown in Fig. \ref{fig:stnet} that we will refer to as \emph{standard}. The network consists of one input unit, one output unit, and $N$ fully-connected hidden layers, each including a constant number $M$ of units. We refer to $N$ as the \emph{depth} of the network and to $M$ as its \emph{width}. The layers are fully connected in the sense that each unit is connected with all units of the previous and the next layer. Only neighboring layers are connected. A hidden unit computes the map $$(x_1,\ldots,x_K)\mapsto\sigma\Big(\sum_{k=1}^K w_kx_k+h\Big),$$
where $x_1,\ldots,x_K$ are the inputs, $\sigma$ is a nonlinear activation function, and $(w_k)_{k=1}^K$ and $h$ are the weights associated with this computation unit. For the first hidden layer $K=1$, otherwise $K=M$. The network output unit is assumed to compute a linear map:
$$(x_1,\ldots,x_M)\mapsto\sum_{k=1}^M w_kx_k+h.$$
The total number of weights in the network then equals $$\nu_{M,N}=M^2(N-1)+M(N+2)+1.$$  

We will take the activation function $\sigma$ to be the ReLU function (Rectified Linear Unit), which is a popular choice in applications:
\begin{equation}\label{eq:relu}\sigma(x)=\max(0,x).
\end{equation}
With this activation function, the functions implemented by the whole network are continuous and piecewise linear, in particular they belong to $W^{1,\infty}([0,1])$. Let  $\eta_{M,N}:\mathbb R^{\nu_{M,N}}\to W^{1,\infty}([0,1])$ map the weights of the width-$M$ depth-$N$ network to the respective implemented function. 
Consider uniform approximation rates for the ball $B$, with or without the weight selection continuity assumption:
\begin{align*}
d_{\rm cont}(M,N)=&\inf_{\substack{\psi:B\to\mathbb R^{\nu_{M,N}}\\\psi\text{ continuous}}}\sup_{f\in B}\|f-\eta_{M,N}(\psi(f))\|_\infty\\
d_{\rm all}(M,N)=&\inf_{\psi:B\to\mathbb R^{\nu_{M,N}}}\sup_{f\in B}\|f-\eta_{M,N}(\psi(f))\|_\infty
\end{align*}
Theorem \ref{th:1} implies that $d_{\rm cont}(M,N)\ge \frac{c}{\nu_{M,N}}$.

We will be interested in the scenario where the width $M$ is fixed and the depth $N$ is varied. The main result of our paper is an upper bound for $d_{\rm all}(M,N)$ that we will prove in the case $M=5$ (but the result obviously also holds for larger $M$).   
\begin{theor}\label{th:2}
$$d_{\rm all}(5,N)\le \frac{c}{N\ln N},$$
with some absolute constant $c$.
\end{theor}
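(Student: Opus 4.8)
The plan is to construct, for every $f\in B$, an explicit and deliberately discontinuous assignment of weights $\psi(f)\in\mathbb R^{\nu_{5,N}}$ such that the resulting network function $\eta_{5,N}(\psi(f))$ is within $\tfrac{c}{N\ln N}$ of $f$ in the sup-norm. The architecture is used as a \emph{cascade decoder}: $f$ is first compressed into a finite string of symbols, this string is written into the network weights, and the $N$ hidden layers then decode it while scanning the input $x$. Since continuity of $\psi$ is not required here, the compression step is allowed to round, which is exactly what lets us beat the bound of Theorem~\ref{th:1}.

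First I would fix scales $L\asymp N$ and $T\asymp\ln N$ and set $n=LT\asymp N\ln N$; this is the number of linear pieces of the approximant, so the interpolation error budget is $\tfrac1n\asymp\tfrac1{N\ln N}$. Partition $[0,1]$ into $L$ blocks $I_1,\dots,I_L$ of length $\tfrac1L$, each split into $T$ subintervals of length $\tfrac1n$. Round the endpoint values $f(\ell/L)$ to the grid $\tfrac1{2L}\mathbb Z$, so that the coarse increments $v_\ell-v_{\ell-1}$ take values in a fixed finite set, and on each block $I_\ell$ replace the remainder $f-(\text{coarse linear interpolant})$ — a function that is $O(1)$-Lipschitz and $O(\tfrac1L)$-small, hence varies by only $O(\tfrac1n)$ across a subinterval — by its piecewise-linear interpolant through the $T$ subnodes, with node values rounded to $\tfrac1{2n}\mathbb Z$; the resulting fine increments again lie in a fixed finite set. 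Call the global piecewise-linear result $g_f$; a routine estimate gives $\|f-g_f\|_\infty\le\tfrac{c_1}{n}$. The map $f\mapsto g_f$, viewed through this finite data, is discontinuous precisely because of the rounding.

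Next I would build a width-$5$, depth-$O(N)$ ReLU network realizing a function within $\tfrac{c_2}{n}$ of $g_f$. It runs in two phases. The \emph{scanning phase} uses $O(1)$ layers per block and carries only a bounded number of channels: the input $x$, a channel that will latch a rescaled local coordinate $\zeta\in[0,1]$, a channel that will latch a single real number $\xi\in[0,1]$, and one or two scratch channels. At block $\ell$ it computes the indicator $\mathbf 1[x\in I_\ell]$ and, when this fires, latches $\zeta=(x-\tfrac{\ell-1}{L})L$ and $\xi=\xi_\ell$, where $\xi_\ell$ is a real number whose base-$b$ digits ($b=O(1)$) encode, in order, the offset $g_f(\tfrac{\ell-1}{L})$ to full precision ($O(\log n)$ digits), the coarse increment on $I_\ell$ ($O(1)$ digits), and the $T$ fine increments on $I_\ell$ ($O(1)$ digits each). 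The \emph{decoding phase} then uses $O(1)$ layers per digit — hence $O(\log n)=O(\ln N)$ layers in total — and, by iterating the digit map $y\mapsto\bigl(by-\lfloor by\rfloor,\ \lfloor by\rfloor\bigr)$ on the latched $\xi$, reconstructs the offset, adds the coarse linear term evaluated at $\zeta$, accumulates the fine correction up to position $\zeta$, and outputs. The weights generated by this recipe constitute $\psi(f)$; the channel count never exceeds five and the depth is $O(N)$, so after absorbing constants into $N$ we get the stated bound by the triangle inequality $\|f-\eta_{5,N}(\psi(f))\|_\infty\le\|f-g_f\|_\infty+\|g_f-\eta_{5,N}(\psi(f))\|_\infty\le\tfrac{c}{n}\le\tfrac{c}{N\ln N}$.

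Two points carry the technical weight. The first is that a ReLU network computes a continuous function, so $\lfloor by\rfloor$ and $\mathbf 1[x\in I_\ell]$ cannot be implemented exactly; one uses their continuous piecewise-linear ``soft'' versions, which are exact outside $\gamma$-neighbourhoods of the jump locations, and one chooses the encoding $\xi_\ell$ and a slight dithering of the block grid so that every quantity ever fed to these primitives stays outside those neighbourhoods — the unavoidable error inside the narrow transition zones is harmless because $g_f$ varies by only $O(\tfrac1n)$ there. The second, and the real obstacle, is the strict channel budget: one must realize ``compare $x$ to a threshold, conditionally latch two values, then run a Horner-type digit extraction'' using only five neurons per layer; the conditional latch can be done through the identity $\mathbf 1[\cdot]\cdot u=\min\bigl(u,\ C\,\mathbf 1[\cdot]\bigr)$ valid for $0\le u\le C$, and a careful audit of which channels are live at each stage must show that five suffice — this is what forces the width up to $M=5$. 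Everything else is routine piecewise-linear bookkeeping.
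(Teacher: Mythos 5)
Your route is genuinely different from the paper's: you pack the local data of each block into the digits of a single real weight and recover it with an $O(\ln N)$-deep digit-extraction cascade, whereas the paper gets the $\ln N$ factor by caching --- there are only $|\Gamma|\le 3^m\approx\sqrt N$ distinct quantized local shapes, each implemented once by a shared subnetwork, with the $f$-dependent assignment of intervals to shapes written into the connectivity. Both ideas could in principle yield $O(1/(N\ln N))$, but your write-up has a genuine gap exactly where the paper works hardest: at the block boundaries. A ReLU network is continuous, so the latched register $\xi(x)$ must sweep through every value between $\xi_\ell$ and $\xi_{\ell+1}$ as $x$ crosses the boundary of $I_\ell$, and $\zeta(x)$ must sweep from $1$ back to $0$; no dithering of the grid or of the encoding can keep these intermediate values away from the decoder. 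The decoder is nonlinear, so its output on the mixture $\lambda\xi_\ell+(1-\lambda)\xi_{\ell+1}$ is not the corresponding mixture of correct outputs; since adjacent offsets $g_f(\tfrac{\ell-1}{L})$ and $g_f(\tfrac{\ell}{L})$ can differ by $\Theta(1/L)$ and the coarse and fine corrections are likewise $\Theta(1/L)$-sized, the output inside a transition zone is generically off by $\Theta(1/L)=\Theta(1/N)$. This is a sup-norm error, so shrinking the zones does not help, and $\Theta(1/N)$ destroys the claimed $O(1/(N\ln N))$ bound. Your reason for dismissing it (``$g_f$ varies by only $O(1/n)$ there'') addresses the wrong quantity: what matters is how far the \emph{network output} strays from $g_f$ inside the zone, not how much $g_f$ itself moves.

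To repair this you would need each block's localized contribution to vanish continuously at the block's endpoints, so that whatever the decoder produces mid-transition is automatically suppressed. That is precisely the paper's device: the coarse interpolant $\widetilde f_1=\sum_t w_t\sigma(x-\tfrac tT)+h$ is computed globally and exactly, with no localization at all, and only the residual $f_2=f-\widetilde f_1$ --- which vanishes at every breakpoint and is uniformly $O(1/T)$ --- is localized, via the tooth functions $\phi(Tx-t)$ and the maps $\theta_\gamma$ satisfying $\theta_\gamma(a,0)=\theta_\gamma(0,b)=0$, with a mod-$3$ splitting to prevent overlapping supports from double counting. If you restructure your scheme so that only an endpoint-vanishing residual is latched and decoded, the boundary problem disappears, but the depth accounting must then be redone. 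Separately, your width-$5$ feasibility is asserted rather than verified (``a careful audit \ldots must show that five suffice''); carrying a signed intermediate value through a ReLU layer costs either an extra channel or the large-constant-offset trick, so that audit is not vacuous. As written, the proof does not go through.
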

Comparing this result with Theorem \ref{th:1}, we see that without the assumption of continuous weight selection the approximation error is lower than with this assumption at least by a factor logarithmic in the size of the network:
\begin{corol}
$$\frac{d_{\rm all}(5,N)}{d_{\rm cont}(5,N)}\le \frac{c}{\ln N},$$
with some absolute constant $c$.
\end{corol}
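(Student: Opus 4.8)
The plan is to reduce the approximation of an arbitrary $f\in B$ to the decoding of a finite bit string, and then to realise that decoding by a width-$5$ ReLU network of depth $O(N)$ whose weights carry the bit string. Since $\psi$ need not be continuous or even measurable, we may pack all information about $f$ into the real weights however we wish; the entire difficulty lies in building a network that reads it back.

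\emph{Step 1: finite encoding of $f$.} Fix $L=\lfloor cN\ln N\rfloor$ for a small constant $c$ to be chosen, put $h=1/L$, let $f_L$ be the continuous piecewise-linear interpolant of $f$ on $\{0,h,2h,\dots,1\}$, and let $\hat f$ be obtained from $f_L$ by rounding every nodal value to the nearest multiple of $h$. Then $\|f-\hat f\|_\infty\le\|f-f_L\|_\infty+\|f_L-\hat f\|_\infty\le 2h$. The key observation is that $\hat f$ is cheap to describe: since $f$ is $1$-Lipschitz, each increment $\hat f(jh)-\hat f((j-1)h)$ is a multiple of $h$ of absolute value at most $2h$, hence one of five values, so $\hat f$ is specified by $O(\log N)+L\log_2 5=O(N\ln N)$ bits. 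It therefore suffices to produce, for each such $\hat f$, a width-$5$ depth-$N$ network implementing $\hat f$ up to error $O(h)$, with $c$ chosen so that this network fits into depth $N$; values of $N$ below an absolute constant are handled trivially by enlarging $c$.

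\emph{Step 2: the decoding network.} Group the $L$ cells into $P=\Theta(N)$ consecutive blocks of $m=\Theta(\ln N)$ cells, and store each block's $O(m)$-bit local description in the fractional expansion of a single real weight. The network is a chain of $P$ ``stages'' of $O(1)$ layers each, followed by a ``local decoder'' of $O(m)$ layers; the five neurons of a layer carry the input $x$, a running value $A$ (the part of $\hat f(x)$ accumulated so far), a latched copy of the local code of the block containing $x$, and one auxiliary coordinate. Stage $p$ adds to $A$ the total increment $H_p$ of $\hat f$ across block $p$, weighted by a ramp supported exactly on that block — for $x$ past block $p$ the full $H_p$ is added, for $x$ before it nothing, and for $x$ inside it a partial amount the decoder will correct — and, when $x$ lies in (a neighbourhood of) block $p$, latches that block's code. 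The local decoder reconstructs, from the latched code and the position of $x$ within its block, the remaining within-block correction and adds it to $A$; a telescoping identity then makes the output equal $\hat f(x)$. The depth is $\Theta(P)+\Theta(m)=\Theta(N)$ and the width is $5$, so for a suitable $c$ everything fits into $\nu_{5,N}$ weights and $N$ layers, giving $d_{\rm all}(5,N)\le 2h=O(1/(N\ln N))$. The rate is morally the right one: such a network has $\Theta(N)$ real weights, each of which a depth-$N$ narrow network can realistically decode only to $\Theta(\ln N)$ bits, for a total of $\Theta(N\ln N)$ bits, matching the metric entropy of $B$ at scale $\Theta(1/(N\ln N))$.

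\emph{Main obstacle.} The hard part is the explicit continuous, width-$5$ ReLU realisation of the two ``database'' operations — latching the code of the block containing $x$, and recovering from a packed weight the prefix sum of increments as $x$ sweeps across a block — without spoiling the $O(h)$ sup-norm error. The latched code, as a function of $x$, jumps at block boundaries and so cannot be produced by a continuous network; the stages must therefore be arranged (through the ramp weighting of the $H_p$, together with a sawtooth/bit-extraction-type computation of the block address) so that near every boundary the blended or ``wrong'' code is invisible because the ramp that would expose it vanishes there. Simultaneously neutralising all boundaries, squeezing the whole construction into exactly five neurons per layer, and controlling the precision needed to store and read back $\Theta(\ln N)$ bits per weight is where the real work lies. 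Once Theorem \ref{th:2} is in hand the corollary is immediate: Theorem \ref{th:1} gives $d_{\rm cont}(5,N)\ge c/\nu_{5,N}$ with $\nu_{5,N}=30N-14=\Theta(N)$, so $d_{\rm all}(5,N)/d_{\rm cont}(5,N)=O(1/\ln N)$.
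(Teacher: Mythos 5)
Your reduction of the corollary itself is fine: granting Theorem \ref{th:2}, combining it with $d_{\rm cont}(5,N)\ge c/\nu_{5,N}$ from Theorem \ref{th:1} and $\nu_{5,N}=30N-14=\Theta(N)$ gives the stated ratio, and your Step 1 (quantizing the piecewise-linear interpolant so that $f$ is described by $\Theta(N\ln N)$ bits) together with the entropy accounting correctly identifies why the rate $1/(N\ln N)$ ought to be attainable. The problem is that everything after that is a plan rather than a proof. The entire content of Theorem \ref{th:2} is the explicit width-5 construction, and your proposal defers exactly that: extracting $\Theta(\ln N)$ bits from the fractional part of a weight with $O(\ln N)$ layers, ``latching'' the code of the block containing $x$ (a step function of $x$, hence not computable exactly by any continuous network), and arranging the ramp weights so that the unavoidable errors near every block boundary are annihilated simultaneously --- each of these is a nontrivial width-constrained ReLU gadget, and none is constructed. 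You name these difficulties yourself (``where the real work lies''), which is honest but leaves the argument unable to establish even $d_{\rm all}(5,N)=o(1/N)$ as written.

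It is also worth recording that your intended route is genuinely different from the paper's. You store the $\Theta(N\ln N)$ bits in the binary expansions of $\Theta(N)$ high-precision weights and recover them by bit extraction (the technique familiar from VC-dimension bounds for neural networks). The paper instead stores the information combinatorially, in the connectivity of an $f$-dependent architecture: it interpolates $f$ at $T=\Theta(N)$ nodes, approximates the rescaled residual on each subinterval by one of $|\Gamma|\le 3^m$ cached shapes with $m=\Theta(\ln N)$, builds each cached shape \emph{once} as an $O(m)$-layer block, and routes every subinterval to its assigned shape through the functions $\theta_\gamma$, with a mod-3 splitting to prevent overlapping supports from interfering; the depth is $O(T)+O(m\,3^m)=O(N)$ and the error is $2/(Tm)$. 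This sidesteps your main obstacle entirely: nothing discontinuous in $x$ ever needs to be latched, because $\theta_\gamma$ vanishes automatically outside the subintervals assigned to $\gamma$, thanks to positive homogeneity of $\sigma$ and the identity $\sum_{r=0}^{m-1}c_{\gamma,r}\sigma(1-\frac rm)=0$. If you want to complete your own route rather than adopt the paper's, the missing ingredient is a worked-out bit-extraction/latching gadget with sup-norm (not merely pointwise-on-a-grid) error control at width 5; until that is supplied, there is a genuine gap.
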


The remainder of the paper is the proof of Theorem \ref{th:2}. The proof relies on a construction of adaptive network architectures with ``cached'' elements from \cite{yarotsky2016error_v3}. We use a somewhat different, streamlined version of this construction: whereas in \cite{yarotsky2016error_v3} it was first performed for a discontinuous activation function and then extended to ReLU by some approximation, in the present paper we do it directly, without invoking auxiliary activation functions.

\section{Proof of Theorem \ref{th:2}}

It is convenient to divide the proof into four steps. In the first step we construct for any function $f\in B$ an approximation $\widetilde f$ using cached functions. In the second step we expand the constructed approximation in terms of the ReLU function \eqref{eq:relu} and linear operations. In the third step we implement $\widetilde f$ by a highly parallel shallow network with an $f$-dependent (``adaptive'') architecture.  In the fourth step we show that this adaptive architecture can be embedded in the standard deep architecture of width 5.


\paragraph{Step 1: Cache-based approximation.}

We first explain the idea of the construction. We start with interpolating $f$ by a piece-wise linear function $\widetilde f_1$ with uniformly distributed nodes. After that, we create a ``cache'' of auxiliary functions that will be used for detalization of the approximation in the intervals between the nodes. The key point is that one cached function can be used in many intervals, which will eventually lead to a $\sim\log N$ saving in the error--complexity relation. The assignment of cached functions to the intervals will be $f$-dependent and encoded in the network architecture in Step 3.

The idea of reused subnetworks dates back to the proof of the $O(2^n/n)$ upper bound for the complexity of Boolean circuits implementing $n$-ary functions (\cite{shannon1949synthesis}).  

We start now the detailed exposition. Given $f\in B$, we will construct an approximation $\widetilde f$ to $f$ in the form
$$\widetilde f= \widetilde f_1+\widetilde f_2.$$
Here, $\widetilde f_1$ is the piecewise linear interpolation of $f$ with the breakpoints $\{\frac{t}{T}\}_{t=0}^T$:
$$\widetilde f_1\Big(\frac{t}{T}\Big)=f\Big(\frac{t}{T}\Big), \quad t=0,1,\ldots,T.$$
The value of $T$ will be chosen later.

Since $f$ is Lipschitz with constant 1, $\widetilde f_1$ is also Lipschitz with constant 1. We denote by $I_t$ the intervals between the breakpoints:
$$I_t=\Big[\frac{t}{T}, \frac{t+1}{T}\Big),\quad t=0,\ldots,T-1.$$
We will now construct $\widetilde f_2$ as an approximation to the difference \begin{equation}\label{eq:f2}f_2=f-\widetilde f_1.\end{equation} 
Note that $f_2$ vanishes at the endpoints of the intervals $I_t$:
\begin{equation}\label{eq:f2pr1} f_2\Big(\frac{t}{T}\Big)=0,\;\; t=0,\ldots,T,
\end{equation}
and $f_2$ is Lipschitz with constant 2: 
\begin{equation}\label{eq:f2pr2} |f_2(x_1)-f_2(x_2)|\le 2|x_1-x_2|,
\end{equation}
since $f$ and $\widetilde f_1$ are Lipschitz with constant 1.

To define $\widetilde f_2$, we first construct a set $\Gamma$ of cached functions. Let $m$ be a positive integer to be chosen later. Let $\Gamma$ be the set of piecewise linear functions $\gamma: [0,1]\to\mathbb R$ with the breakpoints $\{\frac{r}{m}\}_{r=0}^m$ and the properties 
$$\gamma(0)=\gamma(1)=0$$
and
$$\gamma\Big(\frac{r}{m}\Big)-\gamma\Big(\frac{r-1}{m}\Big)\in \Big\{-\frac{2}{m}, 0, \frac{2}{m}\Big\},\quad r =1, \ldots, m.$$
Note that the size $|\Gamma|$ of $\Gamma$ is not larger than $3^m$.

If  $g:[0,1]\to \mathbb R$ is any Lipschitz function with constant 2 and $g(0)=g(1)=0$, then $g$ can be approximated by some $\gamma\in \Gamma$ with error not larger than $\frac{2}{m}$: namely, take $\gamma(\frac{r}{m})=\frac{2}{m}\lfloor g(\frac{r}{m})/\frac{2}{m}\rfloor$ (here $\lfloor\cdot\rfloor$ is the floor function.)

Moreover, if $f_2$ is defined by \eqref{eq:f2}, then, using \eqref{eq:f2pr1}, \eqref{eq:f2pr2}, on each interval $I_t$ the function $f_2$ can be approximated with error not larger than $\frac{2}{Tm}$ by a properly rescaled function $\gamma\in\Gamma$. Namely, for each $t=0,\ldots,T-1$ we can define the function $g$ by $g(y)=Tf_2(\frac{t+y}{T})$. Then it is Lipschitz with constant 2 and $g(0)=g(1)=0$, so we can find $\gamma_t\in\Gamma$ such that
\begin{equation*}
\sup_{y\in [0,1)}\Big|Tf_2\Big(\frac{t+y}{T}\Big)-\gamma_t(y)\Big| \le \frac{2}{m}.
\end{equation*}
This can be equivalently written as 
\begin{equation*}
\sup_{x\in I_t}\Big|f_2(x)-\frac{1}{T}\gamma_t(Tx-t)\Big| \le \frac{2}{Tm}.
\end{equation*}
Note that the obtained assignment $t\mapsto \gamma_t$ is not injective, in general ($T$ will be larger than $|\Gamma|$).

We can then define $\widetilde f_2$ on the whole interval $[0,1)$ by
\begin{equation}\label{eq:f2i}
\widetilde f_2(x)=\frac{1}{T}\gamma_t(Tx-t),\quad x\in I_t, \quad t=0,\ldots,T-1.
\end{equation}
This $\widetilde f_2$ approximates $f_2$ with error $\frac{2}{Tm}$ on $[0,1)$: 
\begin{equation*}
\sup_{x\in[0,1)}|f_2(x)-\widetilde f_2(x)|\le \frac{2}{Tm},
\end{equation*}
and hence, by \eqref{eq:f2}, for the full approximation $\widetilde f=\widetilde f_1+\widetilde f_2$ we will also have
\begin{equation}\label{eq:f2tm}
\sup_{x\in[0,1)}|f(x)-\widetilde f(x)|\le \frac{2}{Tm}.
\end{equation}

\paragraph{Step 2: ReLU-expansion of $\widetilde f$.} We express now the constructed approximation $\widetilde f=\widetilde f_1+\widetilde f_2$ in terms of linear and ReLU operations. 

Let us first describe the expansion of $\widetilde f_1$. Since $\widetilde f_1$ is a continuous piecewise-linear interpolation of $f$ with the breakpoints $\{\frac{t}{T}\}_{t=0}^T$, we can represent it on the segment $[0,1]$ in terms of the ReLU activation function $\sigma$ as 
\begin{equation}\label{eq:f1exp}
\widetilde f_1(x)=\sum_{t=0}^{T-1}w_t\sigma\Big(x-\frac{t}{T}\Big)+h,
\end{equation}
with some weights $(w_t)_{t=0}^{T-1}$ and $h$.

Now we turn to $\widetilde f_2$, as given by \eqref{eq:f2i}. Consider the ``tooth'' function
\begin{equation}
\phi(x)=
\begin{cases}
0, & x\notin[-1,1],\\
1-|x|,& x\in[-1,1].
\end{cases}
\end{equation}
Note in particular that for any $t=0,1,\ldots,T-1$
\begin{equation}\label{eq:phivanish}
\phi(Tx-t)=0\quad \text{if } x\notin I_{t-1}\cup I_{t}.
\end{equation}
The function $\phi$ can be written as 
\begin{equation}\label{eq:phiexp}\phi(x)= \sum_{q=-1}^1\alpha_q\sigma(x-q),
\end{equation}
where $$\alpha_{-1}=\alpha_1=1, \alpha_0=-2.$$
Let us expand each $\gamma\in\Gamma$ over the basis of shifted ReLU functions:
\begin{equation}\label{eq:gammaexp}
\gamma(x) = \sum_{r=0}^{m-1}c_{\gamma,r}\sigma\Big(x-\frac{r}{m}\Big),\quad x\in[0,1],
\end{equation}
with some coefficients $c_{\gamma,r}$. There is no constant term because $\gamma(0)=0$. Since $\gamma(1)=0$, we also have
\begin{equation}\label{eq:cgr1}
\sum_{r=0}^{m-1}c_{\gamma,r}\sigma\Big(1-\frac{r}{m}\Big) = 0.
\end{equation}
Consider the functions $\theta_\gamma:\mathbb R^2\to\mathbb R$ defined by 
\begin{equation}\label{eq:ggammaexp}
\theta_\gamma(a,b)=\sum_{r=0}^{m-1}c_{\gamma,r}\sigma\Big(\frac{m-r}{m}a-\frac{r}{m}b\Big).
\end{equation}

\begin{lemma} 
\begin{align}\label{eq:ggamma11}
\theta_\gamma(a,0)&=0\quad\text{ for all } a\ge 0,\\
\label{eq:ggamma12}
\theta_\gamma(0,b)&=0\quad\text{ for all } b\ge 0,\\
\label{eq:ggamma2}
\theta_{\gamma}(\phi(Tx-t-1),\phi(Tx-t))&=\begin{cases} \gamma(Tx-t),& x\in I_t,\\
0, &x\notin I_{t},\end{cases}\end{align}
for all $x\in[0,1]$ and $t=0,1,\ldots,T-1$.
\end{lemma}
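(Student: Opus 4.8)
The plan is to verify the three identities by direct computation, using only that the ReLU $\sigma$ is linear on $[0,\infty)$ and vanishes on $(-\infty,0]$, together with the constraint \eqref{eq:cgr1}.

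First I would dispose of \eqref{eq:ggamma11}. For $r=0,\ldots,m-1$ we have $\frac{m-r}{m}\ge\frac1m>0$, so when $a\ge0$ every argument $\frac{m-r}{m}a$ is non-negative and $\sigma$ acts on it linearly. Hence $\theta_\gamma(a,0)=a\sum_{r=0}^{m-1}c_{\gamma,r}\tfrac{m-r}{m}=a\sum_{r=0}^{m-1}c_{\gamma,r}\sigma(1-\tfrac rm)$, which is $0$ by \eqref{eq:cgr1}. For \eqref{eq:ggamma12}, when $b\ge0$ each argument $-\tfrac rm b$ is non-positive (and equals $0$ when $r=0$), so every term is annihilated by $\sigma$ and $\theta_\gamma(0,b)=0$.

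The substantive case is \eqref{eq:ggamma2}, which I would split according to whether $x\in I_t$. If $x\in I_t$, put $y=Tx-t\in[0,1)$; then $\phi(Tx-t)=\phi(y)=1-y$, while $\phi(Tx-t-1)=\phi(y-1)=y$ since $y-1\in[-1,0)$. Plugging $a=y$, $b=1-y$ into \eqref{eq:ggammaexp}, the argument of the $r$-th ReLU becomes $\frac{m-r}{m}y-\frac rm(1-y)=y-\tfrac rm$; therefore $\theta_\gamma(y,1-y)=\sum_{r=0}^{m-1}c_{\gamma,r}\sigma(y-\tfrac rm)=\gamma(y)=\gamma(Tx-t)$ by \eqref{eq:gammaexp}. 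If $x\notin I_t$, then either $x<\tfrac tT$, so that $Tx-t-1<-1$ and $\phi(Tx-t-1)=0$, whence \eqref{eq:ggamma12} applied with $b=\phi(Tx-t)\ge0$ gives $\theta_\gamma=0$; or $x\ge\tfrac{t+1}T$, so that $Tx-t\ge1$ and $\phi(Tx-t)=0$, whence \eqref{eq:ggamma11} applied with $a=\phi(Tx-t-1)\ge0$ gives $\theta_\gamma=0$. (Since $\phi\ge0$ everywhere, the sign hypotheses of \eqref{eq:ggamma11}--\eqref{eq:ggamma12} are automatically satisfied.)

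I do not expect a genuine obstacle: the whole lemma reduces to the one-line algebraic identity above plus a case check on where $x$ lies relative to $I_t$. The only point requiring a little care is to match the half-open convention $I_t=[\tfrac tT,\tfrac{t+1}T)$ with the exact locations where $\phi(Tx-t)$ and its shift $\phi(Tx-t-1)$ vanish, so that the three cases $x\in I_t$, $x<\tfrac tT$, $x\ge\tfrac{t+1}T$ cover $[0,1]$ without overlap.
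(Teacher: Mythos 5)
Your proof is correct and follows essentially the same route as the paper: \eqref{eq:ggamma11} via positive homogeneity of $\sigma$ and \eqref{eq:cgr1}, \eqref{eq:ggamma12} via $\sigma$ vanishing on non-positive arguments, and \eqref{eq:ggamma2} by the same two-case split (the explicit substitution $y=Tx-t$ on $I_t$, and the vanishing of one argument of $\theta_\gamma$ via \eqref{eq:phivanish} otherwise). Your write-up is merely a bit more explicit about which argument of $\theta_\gamma$ vanishes in each sub-case of $x\notin I_t$; no gaps.
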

\begin{proof}
Property \eqref{eq:ggamma11} follows from \eqref{eq:cgr1} using positive homogeneity of $\sigma$.  Property \eqref{eq:ggamma12} follows since $\sigma(x)=0$ for $x\le 0$. 

To establish \eqref{eq:ggamma2}, consider the two cases:
\begin{enumerate}
\item If $x\notin I_t,$ then at least one of the arguments of $\theta_\gamma$ vanishes by \eqref{eq:phivanish}, and hence the l.h.s. vanishes by \eqref{eq:ggamma11}, \eqref{eq:ggamma12}. 
\item If $x\in I_t,$ then $\phi(Tx-t-1)=Tx-t$ and $\phi(Tx-t)=1+t-Tx$, so 
$$
\frac{m-r}{m}\phi(Tx-t-1)-\frac{r}{m}\phi(Tx-t)=Tx-t-\frac{r}{m}
$$
and hence, by \eqref{eq:ggammaexp} and \eqref{eq:gammaexp},
$$
\theta_{\gamma}(\phi(Tx-t-1),\phi(Tx-t))=\sum_{r=0}^{m-1}c_{\gamma,r}\sigma\Big(Tx-t-\frac{r}{m}\Big)=\gamma(Tx-t).
$$
\end{enumerate}
\end{proof}
Using definition \eqref{eq:f2i} of $\widetilde f_2$ and representation \eqref{eq:ggamma2}, we can then write, for any $x\in[0,1),$ 
\begin{align}
\widetilde f_2(x)
&=\frac{1}{T}\sum_{t=0}^{T-1}\theta_{\gamma_t}(\phi(Tx-t-1),\phi(Tx-t))\nonumber\\
&=\frac{1}{T}\sum_{\gamma\in\Gamma}\sum_{t:\gamma_t=\gamma}\theta_{\gamma}(\phi(Tx-t-1),\phi(Tx-t)).\nonumber
\end{align}
In order to obtain computational gain from caching, we would like to move summation  over $t$ into the arguments of the function $\theta_\gamma$. However, we need to avoid double counting associated with overlapping supports of the functions $x\mapsto \phi(Tx-t),t=0,\ldots,T-1$. Therefore we first divide all $t$'s into three series $\{t: t\equiv i \Mod{3}\}$ indexed by $i\in\{0,1,2\}$, and we will then move summation over $t$ into the argument of $\theta_\gamma$ separately for each series. Precisely, we write
\begin{equation}\label{eq:f2final0}
\widetilde f_2(x)=\frac{1}{T}\sum_{\gamma\in\Gamma}\sum_{i=0}^2\widetilde f_{2,\gamma,i},
\end{equation}
where
\begin{equation}\label{eq:f2final1}
\widetilde f_{2,\gamma,i}(x)=\sum_{\substack{t:\gamma_t=\gamma\\t\equiv i \Mod{3}}}\theta_{\gamma}(\phi(Tx-t-1),\phi(Tx-t)).
\end{equation}
We claim now that $\widetilde f_{2,\gamma,i}$ can be alternatively written as
\begin{equation}\label{eq:f2final}
\widetilde f_{2,\gamma,i}(x)=\theta_\gamma\Big(\sum_{\substack{t:\gamma_t=\gamma\\t\equiv i \Mod{3}}}\phi(Tx-t-1), \sum_{\substack{t:\gamma_t=\gamma\\t\equiv i \Mod{3}}}\phi(Tx-t)\Big).
\end{equation}
To check that, suppose that $x\in I_{t_0}$ for some $t_0$ and consider several cases: 
\begin{enumerate}
\item Let $i\not\equiv t_0 \Mod{3}$, then both expressions \eqref{eq:f2final1} and \eqref{eq:f2final} vanish. Indeed, at least one of the sums forming the arguments of $\theta_\gamma$ in the r.h.s. of \eqref{eq:f2final} vanishes by \eqref{eq:phivanish}, so $\theta_\gamma$ vanishes by \eqref{eq:ggamma11}, \eqref{eq:ggamma12}. Similar reasoning shows that the r.h.s. of \eqref{eq:f2final1} vanishes too.
\item Let $i\equiv t_0 \Mod{3}$ and $\gamma\ne\gamma_{t_0}$. Then all terms in the sums over $t$ in \eqref{eq:f2final1} and \eqref{eq:f2final} vanish, so both \eqref{eq:f2final1} and \eqref{eq:f2final} vanish.
\item Let $i\equiv t_0 \Mod{3}$ and $\gamma=\gamma_{t_0}$. Then all terms in the sums over $t$ in \eqref{eq:f2final1} and \eqref{eq:f2final} vanish except for the term $t=t_0$, so both \eqref{eq:f2final1} and \eqref{eq:f2final} are equal to $\theta_\gamma(\phi(Tx-t_0-1), \phi(Tx-t_0))$. 
\end{enumerate}
The desired ReLU expansion of $\widetilde f_2$ is then given by \eqref{eq:f2final0} and \eqref{eq:f2final}, where $\phi$ and $\theta_\gamma$ are further expanded by \eqref{eq:phiexp}, \eqref{eq:ggammaexp}:
\begin{alignat}{2}\label{eq:f2final2}
\widetilde f_2(x)&=\frac{1}{T}\sum_{\gamma\in\Gamma}\sum_{i=0}^2\widetilde f_{2,\gamma,i} &&\nonumber\\
&=\frac{1}{T}\sum_{\gamma\in\Gamma}\sum_{i=0}^2\sum_{r=0}^{m-1}c_{\gamma,r}\sigma\Big(&\frac{m-r}{m}&\sum_{\substack{t:\gamma_t=\gamma\\t\equiv i \Mod{3}}}\sum_{q=-1}^1\alpha_q\sigma(Tx-t-q-1)\nonumber\\
&&-\frac{r}{m}&\sum_{\substack{t:\gamma_t=\gamma\\t\equiv i \Mod{3}}}\sum_{q=-1}^1\alpha_q\sigma(Tx-t-q)\Big).
\end{alignat}


\begin{figure}[ht]
\begin{center}
\includegraphics[clip,trim=75mm 23mm 86mm 25mm]{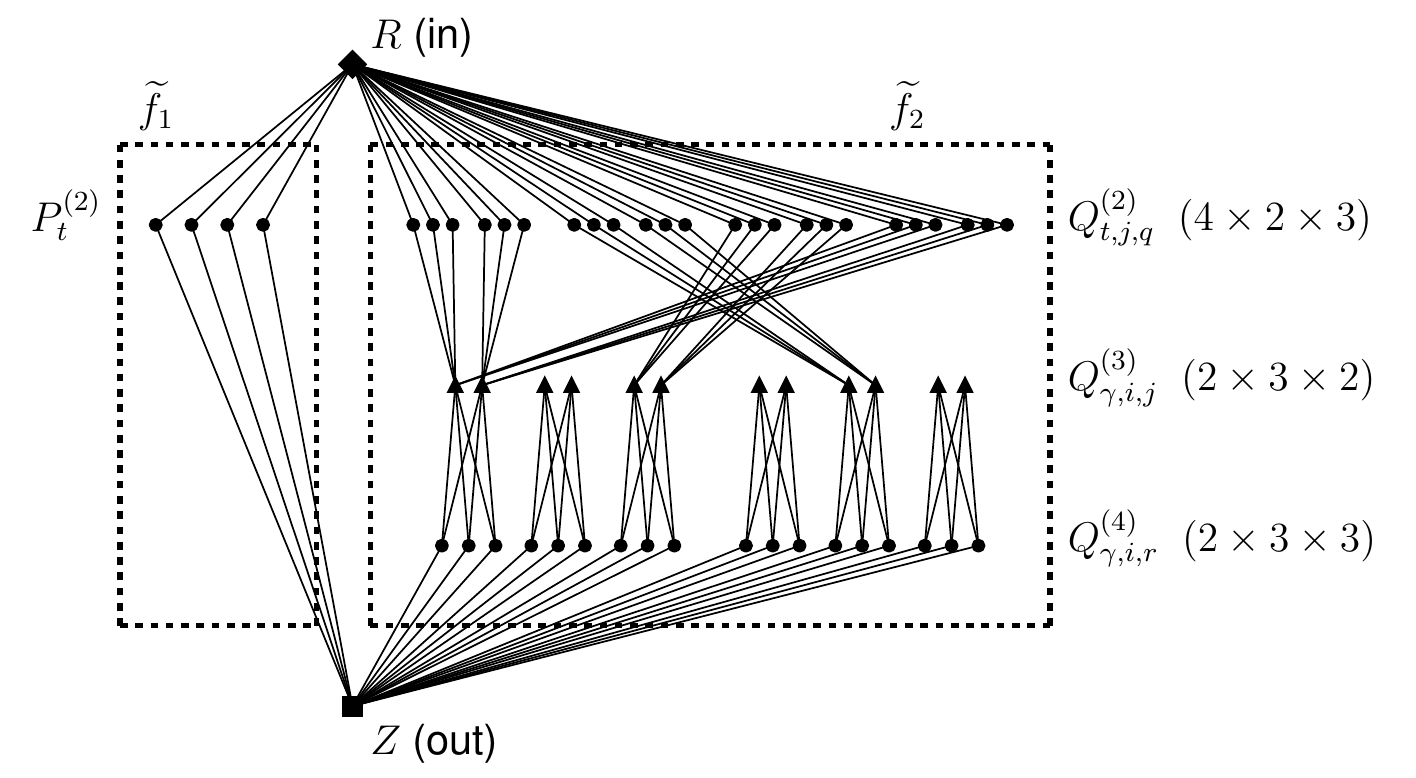}
\caption{Implementation of the function $\widetilde f=\widetilde f_1+\widetilde f_2$ by a neural network with $f$-dependent architecture. The computation units $Q^{(2)}_{t,j,q}, Q^{(3)}_{\gamma,i,j}$ and $Q^{(4)}_{\gamma,i,r}$ are depicted in their layers in the lexicographic order (later indices change faster). Dimensions of the respective index arrays are indicated on the right (the example shown is for $T=4,$ $m=3$ and $|\Gamma|=2$). Computation of $\widetilde f_2$ includes $3|\Gamma|$ parallel computations of functions $\widetilde f_{2,\gamma,i}$; the corresponding subnetworks are formed by units and connections connected to $Q^{(3)}_{\gamma,i,0}$ and $Q^{(3)}_{\gamma,i,1}$ with specific $\gamma,i$.}\label{fig:cache2}
\end{center}
\end{figure}

\paragraph{Step 3: Network implementation with $f$-dependent architecture.}
We will now express the approximation $\widetilde f$ by a neural network (see Fig. \ref{fig:cache2}). The network consists of two parallel subnetworks implementing $\widetilde f_1$ and $\widetilde f_2$ that include three and five layers, respectively (one and three layers if counting only hidden layers). The units of the network either have the ReLU activation function or simply perform linear combinations of inputs without any activation function. We will denote individual units in the subnetworks $\widetilde f_1$ and $\widetilde f_2$ by symbols $P$ and $Q$, respectively, with superscripts numbering the layer and subscripts identifying the unit within the layer. The subnetworks have a common input unit:
$$R=P^{(1)}=Q^{(1)}$$
and their output units are merged so as to sum $\widetilde f_1$ and $\widetilde f_2$:
$$Z=P^{(3)}+Q^{(5)}.$$

Let us first describe the network for $\widetilde f_1$. By \eqref{eq:f1exp}, we can represent  $\widetilde f_1$ by a 3-layer ReLU network as follows:
\begin{enumerate}
\item The first layer contains the single input unit $P^{(1)}$.
\item The second layer contains $T$ units $P^{(2)}_{t}= \sigma(P^{(1)}-\frac{t}{T})$, where $t\in\{0,\ldots,T-1\}$.
\item The third layer contains a single output unit $P^{(3)}=\sum_{t=0}^{T-1}w_tP^{(2)}+h$.
\end{enumerate}
Now we describe the network for $\widetilde f_2$ based on the representation \eqref{eq:f2final2}.
\begin{enumerate}
\item The first layer contains the single input unit $Q^{(1)}$.
\item The second layer contains $6T$ units $Q^{(2)}_{t,j,q}$, where $t\in\{0,\ldots,T-1\}$, $q\in\{-1,0,1\}$, and $j\in\{0,1\}$ corresponds to the first or second argument of the function $\theta_\gamma$:
$$Q^{(2)}_{t,j,q}=\sigma(TQ^{(1)}-t-q-j).$$  
\item The third layer contains $6|\Gamma|$ units $Q^{(3)}_{\gamma,i,j}$, where $\gamma\in\Gamma$, $i\in\{0,1,2\}$, and $j\in\{0,1\}$: 
\begin{equation}\label{eq:q3}Q^{(3)}_{\gamma,i,j}
=\sum_{\substack{t:\gamma_t=\gamma\\t\equiv i \Mod{3}}}\sum_{q=-1}^1\alpha_{q}Q^{(2)}_{t,j,q}.\end{equation}
\item The fourth layer contains $3m|\Gamma|$ units $Q^{(4)}_{\gamma,i,r}$, where $\gamma\in\Gamma, i\in\{0,1,2\}$ and $r\in\{0,\ldots,m-1\}$:
$$Q^{(4)}_{\gamma,i,r}=\sigma\Big(\frac{m-r}{m}Q^{(3)}_{\gamma,i,1}-\frac{r}{m}Q^{(3)}_{\gamma,i,0}\Big).$$
\item The final layer consists of the single output unit \begin{equation}\label{eq:q5}Q^{(5)}=\frac{1}{T}\sum_{\gamma\in\Gamma}\sum_{i=0}^2\sum_{r=0}^{m-1}c_{\gamma,r}Q^{(4)}_{\gamma,i,r}.\end{equation}
\end{enumerate}


\paragraph{Step 3: Embedding into the standard deep architecture.}
We show now that the $f$-dependent ReLU network constructed in the previous step can be realized within the standard width-5 architecture.

Note first that we may ignore unneeded connections in the standard network (simply by assigning weight 0 to them). Also, we may assume some units to act purely linearly on their inputs, i.e., ignore the  nonlinear ReLU activation. Indeed, for any bounded set $D\subset \mathbb R$, if $h$ is sufficiently large, then $\sigma(x+h)-h=x$ for all $x\in D$, hence we can ensure that the ReLU activation function always works in the identity regime in a given unit by adjusting the intercept term in this unit and in the units accepting its output. In particular, we can also implement in this way identity units, i.e. those having a single input and passing the signal further without any changes.

\begin{figure}
\begin{center}
\begin{subfigure}[b]{1\textwidth}
\begin{center}
\includegraphics[clip,trim=22mm 3mm 20mm 0mm]{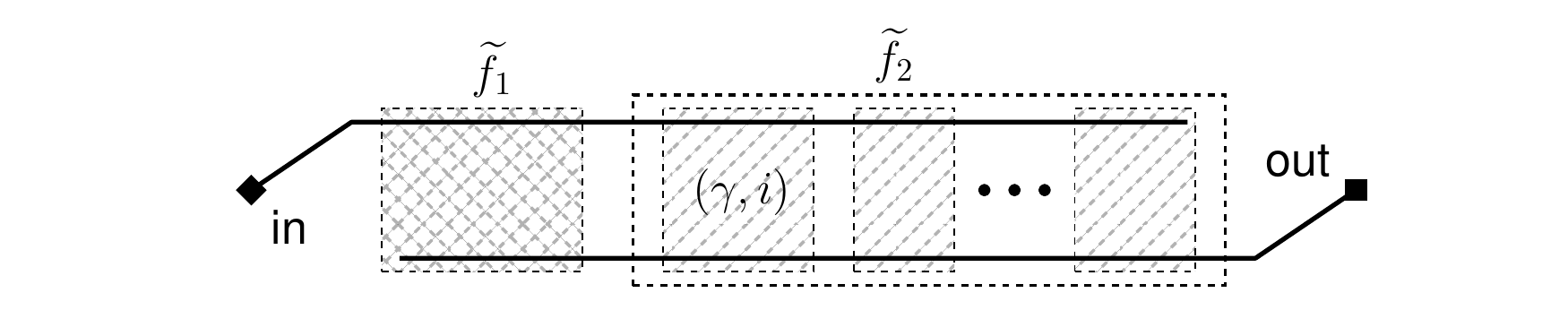}
\caption{Embedding overview: the subnetworks $\widetilde f_1$ and $\widetilde f_2$ are computed in parallel; the subnetwork $\widetilde f_2$ is further divided into parallel subnetworks $\widetilde f_{2,\gamma,i}$, where $\gamma\in \Gamma, i\in\{0,1,2\}$.}\label{fig:embedOverview}
\end{center}
\end{subfigure}
\begin{subfigure}[b]{1\textwidth}
\begin{center}
\includegraphics[clip,trim=15mm 2mm 15mm 0mm]{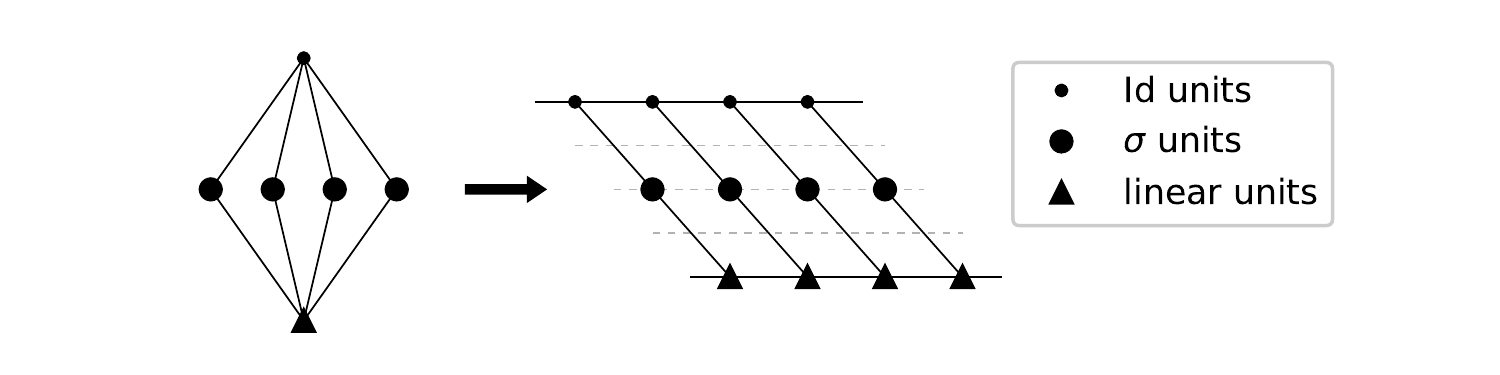}
\caption{Embedding of the $\widetilde f_1$ subnetwork.}\label{fig:embedf1}
\end{center}
\end{subfigure}
\begin{subfigure}[b]{1\textwidth}
\begin{center}
\includegraphics[clip,trim=25mm 7mm 22mm 3mm]{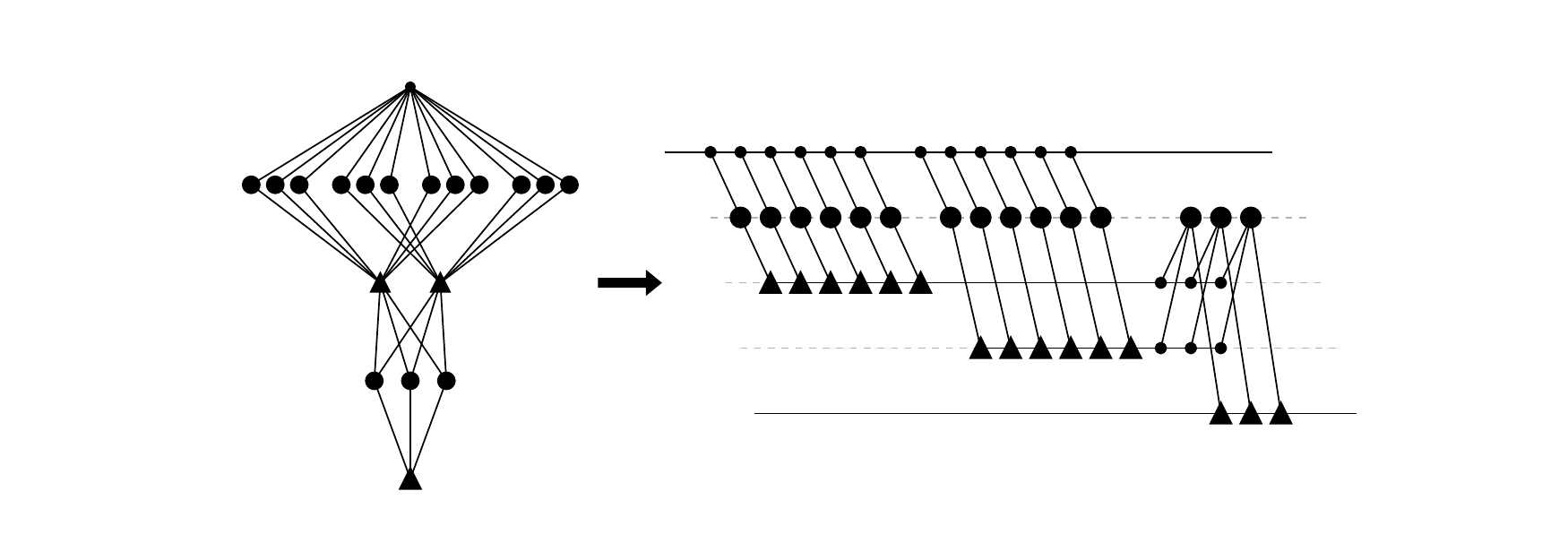}
\caption{Embedding of a subnetwork $\widetilde f_{2,\gamma,i}$.}\label{fig:ebedf2}
\end{center}
\end{subfigure}
\caption{Embedding of the adaptive network into the standard architecture.}
\end{center}
\end{figure}

The embedding strategy is to arrange parallel subnetworks along the  ``depth'' of the standard architecture as shown in Fig. \ref{fig:embedOverview}. Note that  $\widetilde f_1$ and $\widetilde f_2$ are computed in parallel; moreover, computation of $\widetilde f_2$ is parallelized over $3|\Gamma|$ independent subnetworks computing $\widetilde f_{2,\gamma,i}$ and indexed by $\gamma\in \Gamma$ and $i\in\{0,1,2\}$ (see \eqref{eq:f2final2} and Fig. \ref{fig:cache2}). Each of these independent subnetworks gets embedded into its own batch of width-5 layers. The top row of units in the standard architecture is only used to pass the network input to each of the subnetworks, so all the top row units function in the identity mode. The bottom row is only used to accumulate results of the subnetworks into a single linear combination, so all the bottom units function in the linear mode.   

Implementation of the $\widetilde f_1$ subnetwork is shown in Fig. \ref{fig:embedf1}. It requires $T+2$ width-5 layers of the standard architecture. The original output unit $P^{(3)}$ of this subnetwork gets implemented by $T$ linear units that have not more than two inputs each and gradually accumulate the required linear combination. 

Implementation of a  $\widetilde f_{2,\gamma,i}$ subnetwork is shown in Fig. \ref{fig:ebedf2}. Its computation can be divided into two stages. 

In terms of the original adaptive network, in the first stage we perform parallel computations associated with the $Q^{(2)}$ units and combine their results in the two linear units $Q^{(3)}_{\gamma,i,0}$ and $Q^{(3)}_{\gamma,i,1}$. By \eqref{eq:q3}, each of $Q^{(3)}_{\gamma,i,0}$ and $Q^{(3)}_{\gamma,i,1}$ accepts $3N_{\gamma,i}$ inputs from the $Q^{(2)}$ units, where $$N_{\gamma,i}=|\{t\in\{0,\ldots,T-1\}|\gamma_t=\gamma,t\equiv i\Mod{3}\}|.$$
In the standard architecture, the two original linear units $Q^{(3)}_{\gamma,i,0}$ and $Q^{(3)}_{\gamma,i,1}$ get implemented by $6N_{\gamma,i}$ linear units that  occupy two reserved lines of the network. This stage spans $6N_{\gamma,i}+2$ width-5 layers of the standard architecture. 

In the second stage we use the outputs of $Q^{(3)}_{\gamma,i,0}$ and $Q^{(3)}_{\gamma,i,1}$ to compute $m$ values $Q^{(4)}_{\gamma,i,r}, r=0,\ldots,m-1$, and accumulate the respective part $\sum_{r=0}^{m-1}c_{\gamma,r}Q^{(4)}_{\gamma,i,r}$ of the final output \eqref{eq:q5}. This stage spans $m+2$ layers of the standard architecture.

The full implementation of one $\widetilde f_{2,\gamma,i}$ subnetwork thus spans $6N_{\gamma,i}+m+4$ width-5 layers. Since $\sum_{\gamma,i}N_{\gamma,i}=T$ and there are $3|\Gamma|$ such subnetworks, implementation of the whole $\widetilde f_2$ subnetwork spans $6T+3(m+4)|\Gamma|$ width-5 layers. Implementation of the whole $\widetilde f$  network then spans $7T+3(m+4)|\Gamma|+2$ layers. 

It remains to optimize $T$ and $m$ so as to achieve the minimum approximation error, subject to the total number of layers in the standard network being bounded by $N$. Recall that $|\Gamma|\le 3^m$ and that the approximation error of $\widetilde f$ is not greater than $\frac{2}{Tm}$ by \eqref{eq:f2tm}. Choosing $T=\lfloor\frac{N}{8}\rfloor$, $m=\lfloor\frac{1}{2}\log_3 N\rfloor$ and assuming $N$ sufficiently large, we satisfy the network size constraint and achieve the desired error bound $\|f-\widetilde f\|_\infty\le\frac{c}{N\ln N}$, uniformly in $f\in B$. Since, by construction, $\widetilde f=\eta_{5,N}(\psi(f))$ with some weight selection function $\psi$, this completes the proof.


\bibliographystyle{plain}
\bibliography{relu}

\end{document}